\algnewcommand{\LC}[1]{\\ \(\triangleright\) #1}
\theoremstyle{definition}
\newcommand{\eat}[1]{}
\theoremstyle{plain}
\newtheorem{theorem}{Theorem}[section]
\theoremstyle{definition}
\newtheorem{definition}{Definition}
\theoremstyle{remark}
\newcommand{\X}{\mathbf{X}}
\newcommand{\x}{\mathbf{x}}
\newcommand{\F}{\mathbf{F}}
\newcommand{\p}{\mathbf{p}}
\newcommand{\pname}{Marginal Dominant}
\newcommand{\printfnsymbol}[1]{%
  \textsuperscript{\@fnsymbol{#1}}%
}
\title{Credibility-Aware  Multi-Modal Fusion Using Probabilistic Circuits}
\author[1]{Sahil Sidheekh\printfnsymbol{1}}
\author[1]{Pranuthi Tenali\printfnsymbol{1}}
\author[1]{Saurabh Mathur\thanks{Equal contribution}}
\author[2]{Erik Blasch}
\author[3]{Kristian Kersting}
\author[1]{Sriraam Natarajan}
\affil[1]{%
    Erik Jonsson School of Engineering \& Computer Science, The University of Texas at Dallas
}
\affil[2]{%
    Air Force Research Laboratory, Rome, NY, USA
}
\affil[3]{%
    Department of Computer Science , TU Darmstadt
} 
\begin{document}
\maketitle
\begin{abstract}
We consider the problem of late multi-modal fusion for discriminative learning. Motivated by noisy, multi-source domains that require understanding the reliability of each data source, we explore the notion of \emph{credibility} in the context of multi-modal fusion. We propose a combination function that uses probabilistic circuits (PCs) to combine predictive distributions over individual modalities. We also define a probabilistic measure to evaluate the credibility of each modality via inference queries over the PC. Our experimental evaluation demonstrates that our fusion method can reliably infer credibility while maintaining competitive performance with the state-of-the-art. 
\end{abstract}

\section{Introduction}
Real-world decision-making in high-profile tasks such as healthcare requires learning and reasoning reliably by utilizing the diverse modalities of available data sources. While such multi-modal data offer rich representations and potentially multiple views of the underlying phenomena (for example, images vs blood tests in a clinical setting), they also make learning and inference quite challenging. Raw data from different sources is often noisy, incomplete, and inconsistent. This heterogeneity poses a significant obstacle to effective data fusion and analysis.

Multi-modal fusion techniques~\citep{baltruvsaitis2018multimodal} have emerged as a promising approach to combine information from multiple sources to enhance performance on discriminative learning tasks. These techniques aim to extract and integrate complementary information from different modalities, leading to more robust and reliable results. However, a crucial aspect that often remains overlooked in multimodal fusion is the {\em explicit modeling of the credibility} of the information sources. In many applications, such as sensor fusion~\cite{khaleghi2013multisensor}, medical diagnosis~\cite{kline2022multimodal}, and financial analysis~\cite{sawhney2020multimodal}, the quality and reliability of the information sources vary significantly. Distinguishing reliable sources from non-reliable sources is essential for making accurate and informed decisions. Multimodal fusion methods often assume that all sources are equally credible, which can lead to suboptimal performance or even erroneous conclusions.

Credibility-aware methods in the context of late multimodal fusion have previously used weighted average~\cite{rogova2004reliability}, discounting factors~\cite{elouedi2004assessing} and Bayesian networks~\cite{wright2006credibility}. This results in models of credibility that are either too simple (as in the case of weighted averages and discounting factors) to model complex dependencies or too complex to perform tractable inference/reasoning (as in the case of general Bayesian networks or more recent deep models). 
We focus on {\bf multi-modal discriminative learning and propose a late fusion method that uses Probabilistic Circuits (PCs)}~\cite{Choi20}, to effectively combine the predictive distributions over individual modalities. PCs are a class of generative models that are expressive enough to model complex distributions while tractable for exact inference. Using the tractability of PCs, we define a probabilistic measure for assessing the credibility. Some salient features of our approach are that the use of PCs (1) allows for modeling uncertainty over unimodal predictive distributions effectively; (2) makes the model robust to noise and outliers; (3) enables effective handling of missing data; (4) is grounded in a robust theoretical framework; and (4) finally, makes it possible to obtain faithful estimates of credibility. 

Our paper makes the following key contributions: (1) To our knowledge, we introduce the first theoretically grounded multimodal fusion with strong probabilistic semantics based on PCs; Specifically, we identify the class of PCs that are amenable to this task of credibility-aware multi-modal fusion and define their characteristics; (2) We present two versions of our late fusion algorithm with different characteristics; (3) We derive a theoretically grounded measure of credibility and illustrate its connection to the conditional entropy over unimodal predictive distributions, allowing for reliable late fusion; (4) Finally, we experimentally validate the efficacy of PCs in modeling complex interactions between modalities and faithfully estimating their credibility. 

The rest of the paper is organized as follows: we begin with a concise overview of essential background and relevant work. Following this, we formulate the problem at hand and our PC-based fusion method, along with the architectural details and methodology for assessing credibility. We then experimentally evaluate the effectiveness of our method and finally conclude by summarizing our findings, contributions, and future work.


\section{Background}


\subsection{Multi-Modal Fusion}
Multi-modal fusion~\citep{baltruvsaitis2018multimodal} involves the integration of information from diverse sources or modalities. This field harnesses the potential of combining data of various types, such as text, images, and audio, to improve decision-making, pattern recognition, and predictive modeling. There are three broad approaches to multi-modal fusion in the discriminative learning setting, namely, early fusion, intermediate fusion, and late fusion. 

Early fusion approaches fuse information from multiple sources at the input level, typically ahead of feature extraction. A simple way to achieve this would be to combine raw modality features via concatenation or pooling via operations such as average, min, max, etc. \citep{baltruvsaitis2018multimodal}. In more complex deep learning models, early fusion is typically achieved by learning joint feature spaces~\citep{gadzicki2020early}. Apart from the curse of dimensionality, feature aggregation results in the loss of information about source-specific distributions~\citep{schulte2014aggregating}. This makes it difficult to infer the credibility of input sources.

In intermediate fusion, features extracted from each modality undergo further processing and transformation into a combined, higher-level representation~~\citep{Joze2019MMTMMT, Zhang2019CPMNetsCP, PrezRa2019MFASMF}. This approach offers more flexibility compared to early fusion, as the fusion process can take into account the characteristics of each modality individually. This can benefit learning representations, which can be used for fusion even when there's information missing from certain modalities ~\citep{Zhang2019CPMNetsCP}. However, inferring the credibility of individual input modality remains difficult due to the combined nature of representation used by the classifier. 

On the other hand, late fusion approaches combine the information from multiple sources by making predictions on each source independently and then combining the predictions. Combining rules~\citep{natarajan2005learning, manhaeve2018deepproblog} like weighted mean~\citep{shutova2016black} and Noisy-OR~\citep{tian2020uno} are commonly used for late fusion. 
While these combining rules allow explicit modeling of the importance of each source, they assume independence of the influence of each source on the target. Late fusion in deep learning models is implemented via additional feedforward layers ~\citep{glodek2011multiple, ramirez2011modeling}. This allows them to model complex correlations and influences of the sources on the target. However, this also makes it difficult to model the credibility of each source since neural network layers are opaque.

\subsection{Credibility}
Combining information from multiple, heterogeneous sources requires information fusion systems to account for the credibility of each modality's contribution~\citep{de2018uncertainty}. Credibility, as distinct from reliability, focuses on the information's truthfulness, while reliability relates to the source's consistency~\citep{blasch2013urref}. While human experts might estimate their information's credibility (self-confidence), automated sources require external evaluation~\citep{blasch2014urref}.

We follow prior works that approach the problem of accounting for source reliability in multimodal fusion from the perspective of the credibility of the information provided by the source. These works perform multimodal fusion using source-reliability coefficients learned using domain and contextual information ~\citep{Nimier98, Fabre2001PresentationAD}. In the absence of such information, an alternate approach involves learning these coefficients from data. This is achieved by minimizing the distance between a vector of beliefs resulting from fusion and a target vector from the training set ~\citep{rogova2001reinforcement,1262555}. Another data-driven method for establishing reliability is based on \textit{separability}, wherein the average statistical separability of information classes in each source is considered~\citep{Benediktsson1990}. This category of methods i.e. learning coefficients from training data, proves useful in establishing the relative credibility of the predictions of classifiers.

\subsection{Probabilistic Circuits (PC\MakeLowercase{s})}
Probabilistic circuits ~\citep{Choi20} are a class of generative models that represent the joint distribution over a set of random variables (say $\X$) using computational graphs that comprise three types of nodes - sum and product nodes as internal nodes, and simple tractable distributions at the leaves. Formally, a PC is defined as the tuple $(G = (V, E), \theta)$ where the Directed Acyclic Graph $G$ represents the computational graph structure and $\theta$ is the set of learnable parameters. The output of the root node gives the joint distribution modeled by the PC, which can be recursively obtained as:
\begin{align*}
\small
    P_n(\X = \x) = \begin{cases}
        \sum_{c \in \textbf{ch}(n)} w_cP_{c}(\X = \x)& n \in \text{Sum}\\
        \prod_{c \in \textbf{ch}(n)} P_{c}(\X^{\textbf{sc}(c)} = \x^{\textbf{sc}(c)})& n \in \text{ Product}\\
        \psi_{n}(\X = \x) & n\in\text{Leaf}\\
    \end{cases}
\end{align*}
where $\textbf{ch}(n)$ gives the children of node $n$, $\textbf{sc}(n)$ gives the scope of node $n$ and $\psi_n$ is the probability density (or mass) function associated with the leaf node $n$.

The key advantage of PCs is that they admit tractable and often linear time inference for a variety of probabilistic queries under mild assumptions about the structure of $G$. 
In this work, we consider a subclass of PCs that are \textit{smooth} and \textit{decomposable} (typically called sum-product networks ~\cite{SPNPoon2011}). 
A PC satisfies smoothness if the scope of each sum node is identical to the scope of each of its children. It satisfies decomposability if, for each product node, all the children have disjoint scopes. Smoothness and decomposability allow us to tractably infer marginal and conditional distributions from the learned joint.

The structure of PCs can be learned recursively via greedy heuristics ~\citep{gens2013learnspn, rooshenas14learning, dang20astrudel}, or by latent-space decomposition ~\citep{Adel2015LearningTS}. However, structure learning can be costly for large-scale data, and recent approaches rely on random and tensorized structures that resemble deep neural models ~\citep{Mauro17,peharz_20_einsum,peharz20a-rat-spn,sidheekh2023probabilistic} to achieve state-of-the-art performance.

\section{Multimodal fusion via PC\MakeLowercase{s}}

\begin{figure*}[!t]
    \centering
    \includegraphics[width=0.99\linewidth]{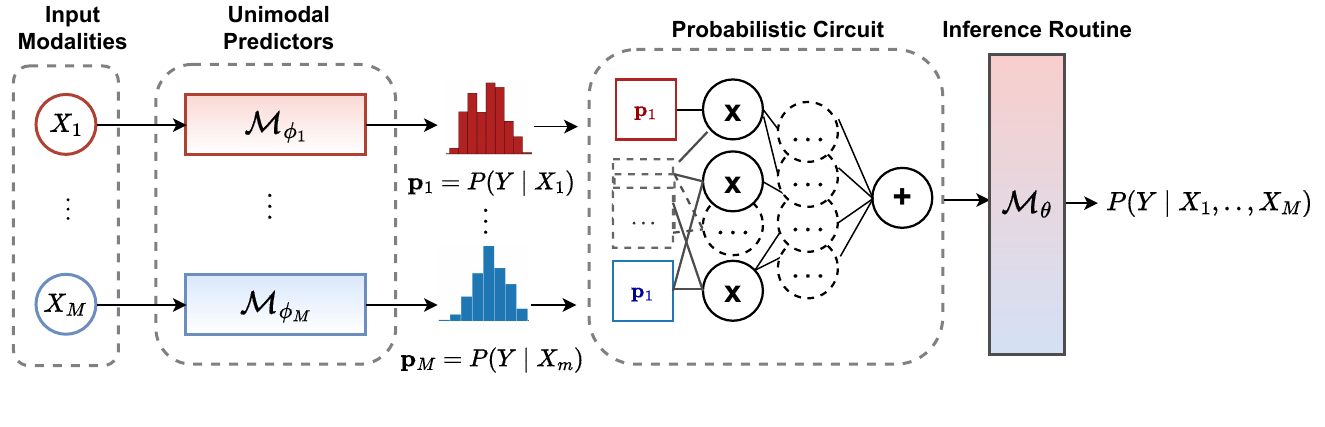}
    \caption{\textbf{Model Diagram} for our proposed PC-based fusion method. Each input modality $\X_i$ is processed by a unimodal predictor $\mathcal{M}_{\phi_i}$ to get the corresponding predictive distribution $\p_i$ over the target $Y$. A probabilistic circuit $\theta$ is used to model the joint distribution over the unimodal predictive distributions and $Y$, and the final prediction is obtained by running an inference routine over it, governed by the form of fusion function employed ($\mathcal{M}_{\theta}$).
    }
    \label{fig:architecture}
\end{figure*}

We begin by formalizing the {\em noisy late multi-modal fusion setting for discriminative learning} that we focus on. Given a dataset in which features predictive of a target concept are obtained from multiple different modalities, the late fusion setting involves training an expert over each modality to estimate the unimodal predictive distribution over the target and then combining them using a fusion function (probabilistic combination function in our case) to obtain the final output. More formally, 

{\bf \textbf{Given:}} A dataset $\mathcal{D} = \{(\x_1^i,\x_2^i \ldots \x_M^i,y^i)\}_{i=1}^{N}$ with $N$ data points, each with information from $M$ different modalities,  i.e. each $\x^i_j \in \mathbb{R}^{d_j}$ where $d_j$ denotes the feature dimension corresponding to modality $j$ for the $i^{th}$ example, and $y^i$ denotes its target class. 

{\bf To do: } Learn a discriminative model $\mathcal{M}$ parameterized by $\{\theta, \phi = \{\phi_i\}_{i=1}^{m}\}$ that approximates the multimodal predictive distribution over $Y$\footnote{We use uppercase to denote random variables and lowercase to denote their corresponding values.}  as 
\begin{align*}
    P(Y|\X_1, \ldots, \X_M) &\approx \mathcal{M}_{\theta,\phi}(\X_1, \ldots, \X_M)\\
    &= \mathcal{M}_{\theta}(\mathcal{M}_{\phi_1}(\X_1),\ldots,\mathcal{M}_{\phi_M}(\X_M))
\end{align*}
where $\mathcal{M}_{\theta}$ is the fusion function, and $\mathcal{M}_{\phi_i}(\text{or } \mathcal{M}_i)$ is the unimodal predictor corresponding to modality $i$.

In several applications, data inherently comes with a degree of noise which can affect the reliability of the information provided by each modality. Different modalities often offer complementary insights into the target  
$Y$; however in the presence of noise, they can potentially present conflicting information (an image might potentially present a conflicting finding to that of a blood test). This necessitates a fusion method that not only leverages the unique information within each modality to make accurate predictions but is also capable of evaluating the reliability of these predictions, providing a measure of each modality's credibility.

Thus, as a key contribution, we develop {\em a principled notion of credibility by taking a probabilistic view of the late multimodal fusion setting}. Let us denote by $\mathcal{F}_{\phi_j}$ the true predictive distribution over target $Y$ given modality $j$, i.e $\mathcal{F}_{\phi_j}=P(Y|\X_j)$. We consider the joint distribution over the unimodal predictors and the target $Y$  and define credibility as the relative amount of information contributed by  a modality to the multi-modal predictive distribution over the target $Y$, as follows:

\begin{definition}
    The \textbf{credibility} of a modality $j$ in predicting the target $Y$ is defined as the divergence between the conditional distributions over $Y$ given all unimodal predictive distributions $\{\mathcal{F}_{\phi_i}\}_{i=1}^{M}$ 
    including and excluding $\mathcal{F}_{\phi_j}$. i.e.
\begin{equation*}
    \mathcal{C}_j = \delta(P(Y \mid \{\mathcal{F}_{\phi_i}\}_{i=1}^{M})\ ||\ P(Y \mid \{\mathcal{F}_{\phi_i}\}_{i=1}^{M} \setminus \{\mathcal{F}_{\phi_j}\})) 
    \label{eq:credibility}
\end{equation*}
\end{definition}
where $\delta$ is a divergence measure, such as the KL-Divergence. It follows that $\mathcal{C}_j \geq 0 \ \forall j$, but can be unbounded. Thus, to facilitate easy comparison across modalities, we define the \textbf{relative credibility} score $\Tilde{\mathcal{C}}$ as 
$$ \Tilde{\mathcal{C}}_j = \frac{\mathcal{C}_j}{\sum_j \mathcal{C}_j}. $$
Note that $0 \leq  \Tilde{\mathcal{C}}_j \leq 1 \forall j$ and $\sum_j  \Tilde{\mathcal{C}}_j=1$, and is therefore a normalized and probabilistic measure for assessing the credibility of modality $j$.

We now outline more formally how the defined notion of credibility is related to the uncertainty over the unimodal predictive distribution. A well-established method for quantifying the uncertainty and information content within a random variable is through the concept of entropy. We present a theorem that correlates the credibility of a modality with the entropy of its predictive distribution, under mild assumptions, as defined below.

\begin{definition}
A model $\mathcal{M}$ representing a probability distribution ($P_\mathcal{M}$) over $n$ random variables $\X$ is said to be \textbf{\pname{}} if its marginals are lower bounded by the joint everywhere. \textit{i.e.,}
\begin{align*}
    P_\mathcal{M}(\X^{-j} = \x^{-j}) &\geq P_\mathcal{M}(\X^{-j} = \x^{-j}, \X^{j} = \x^{j})\ \  \\ &\forall \ (\x^j,\x^{-j}) \in Dom.(\X^j, \X^{-j})
\end{align*}
where $\ j \subseteq \{1,\dots,n\}$ and we use the notation $\X^{-j}$ to denote $\{X_i\}_{i=1}^{n} \setminus \{X_k\}_{k \in j}$ and $Dom.(\X)$ to denote the domain set of the variables in $\X$.
\end{definition}

\begin{theorem}
    The expected credibility $\mathcal{C}^{j}$ of a modality $j$ in predicting the target $Y,$ under a \pname{} distribution is lower bounded by the negative of the conditional entropy $(\mathbb{H})$ of the unimodal predictive distribution of modality $j$ over $Y,$ given the predictive distributions of all other modalities, i.e. 
    \begin{equation*}
    \mathbb{E}[\mathcal{C}^j] \geq - \mathbb{H}(\mathcal{F}_{\phi_j} | \{\mathcal{F}_{\phi_i}\}_{i=1}^{M} \setminus \{\mathcal{F}_{\phi_j}\})
    \end{equation*}
\end{theorem}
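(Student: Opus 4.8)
The plan is to prove the inequality by splitting it into two observations that meet at zero, and then to show that the \pname{} hypothesis is exactly what fixes the sign of the entropy term. Write $\mathbf{F}_{-j}$ for $\{\mathcal{F}_{\phi_i}\}_{i=1}^{M}\setminus\{\mathcal{F}_{\phi_j}\}$. First I would unfold the expected credibility: for each realization of the predictive distributions, $\mathcal{C}^j$ is a KL divergence $\delta\big(P(Y\mid \mathcal{F}_{\phi_j},\mathbf{F}_{-j})\,\|\,P(Y\mid \mathbf{F}_{-j})\big)$, and the expectation is taken over the joint law of $(\mathcal{F}_{\phi_1},\dots,\mathcal{F}_{\phi_M})$ represented by the PC. By Gibbs' inequality every such divergence is non-negative, so $\mathbb{E}[\mathcal{C}^j]\ge 0$ at once. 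It is worth recording that this expected KL is precisely the conditional mutual information $I(Y;\mathcal{F}_{\phi_j}\mid\mathbf{F}_{-j})$, which re-derives the same non-negativity and supplies the information-theoretic reading of the measure.

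Second I would show that the right-hand side is non-positive. Rewriting $-\mathbb{H}(\mathcal{F}_{\phi_j}\mid\mathbf{F}_{-j})=\mathbb{E}[\log P(\mathcal{F}_{\phi_j}\mid\mathbf{F}_{-j})]$, it suffices to argue that this conditional density never exceeds one. This is exactly where the \pname{} assumption enters: instantiating the definition with the index subset selecting $\mathcal{F}_{\phi_j}$ and complement $\mathbf{F}_{-j}$ gives $P(\mathbf{F}_{-j}=\mathbf{f}_{-j})\ge P(\mathbf{F}_{-j}=\mathbf{f}_{-j},\mathcal{F}_{\phi_j}=f_j)$ for all values, hence $P(f_j\mid\mathbf{f}_{-j})\le 1$ pointwise and $\log P(f_j\mid\mathbf{f}_{-j})\le 0$. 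Taking expectations yields $-\mathbb{H}(\mathcal{F}_{\phi_j}\mid\mathbf{F}_{-j})\le 0$. Chaining the two observations, $\mathbb{E}[\mathcal{C}^j]\ge 0\ge -\mathbb{H}(\mathcal{F}_{\phi_j}\mid\mathbf{F}_{-j})$, which is the claim; equivalently, the per-realization inequality $\mathcal{C}^j\ge 0\ge\log P(f_j\mid\mathbf{f}_{-j})$ already holds before integrating.

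The non-negativity and the final chaining are routine; the main obstacle I anticipate is the correct handling of the entropy term. Since each $\mathcal{F}_{\phi_j}$ is a continuous object (a distribution over $Y$), $\mathbb{H}$ is a differential entropy, which may be negative and would in general make $-\mathbb{H}$ positive and the bound false. The \pname{} property is the precise structural hypothesis that forbids the conditional density from exceeding one, thereby forcing $\mathbb{H}\ge 0$; I would therefore be careful to (i) instantiate the \pname{} definition with the right variable subset, and (ii) verify that the property descends from the full joint over $(Y,\mathcal{F}_{\phi_1},\dots,\mathcal{F}_{\phi_M})$ to the marginal over the predictive distributions alone, which follows by applying the defining inequality with complement $(Y,\mathbf{F}_{-j})$ and summing the resulting termwise bound over $Y$. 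A secondary point to state explicitly is that all conditionals are well defined, i.e.\ the conditioning events carry positive density, so that the divisions by $P(\mathbf{f}_{-j})$ and the expectations are legitimate.
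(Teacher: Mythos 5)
Your proof is correct, and it is organized genuinely differently from the paper's. The paper keeps the two sides coupled in a single chain: it expands $\mathcal{C}^j=\delta\bigl(P(Y\mid\F)\,\|\,P(Y\mid\F^{-j})\bigr)$ over joint densities, splits off the term $\log\bigl(P(\F^{-j})/P(\F)\bigr)$, discards it using the \pname{} property, and then applies the log-sum inequality to what remains, whose expectation is $-\mathbb{H}(\mathcal{F}_{\phi_j}\mid\F^{-j})$. You instead prove two independent sign statements and chain them at zero: $\mathbb{E}[\mathcal{C}^j]\ge 0$ by Gibbs' inequality (no \pname{} assumption needed), and $-\mathbb{H}(\mathcal{F}_{\phi_j}\mid\F^{-j})=\mathbb{E}[\log P(\mathcal{F}_{\phi_j}\mid\F^{-j})]\le 0$ because marginal dominance caps the conditional density at one. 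The two routes ultimately rest on the same ingredients --- the paper's log-sum step is exactly equivalent to Gibbs' inequality $\mathcal{C}^j\ge 0$, and its discarded term is exactly your entropy sign condition --- but your arrangement is more elementary and makes explicit something the paper's derivation obscures: the bound factors through $0$, so the theorem is precisely the conjunction of ``expected credibility is non-negative'' and ``the conditional (differential) entropy is non-negative under marginal dominance,'' and hence is never tighter than the trivial bound $\mathbb{E}[\mathcal{C}^j]\ge 0$. Your write-up also supplies two details the paper glosses over: the identification $\mathbb{E}[\mathcal{C}^j]=I(Y;\mathcal{F}_{\phi_j}\mid\F^{-j})$, which gives the measure its information-theoretic reading, and the verification that marginal dominance of the joint over $(Y,\F)$ descends to the marginal law of the predictors alone by summing the pointwise inequality over $y$ --- the step actually needed to justify $P(\F^{-j})\ge P(\F)$, since the definition as stated only compares marginals against the full joint including $Y$, yet the paper invokes this inequality between two marginals without comment.
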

\begin{proof}
    Deferred to the appendix.
\end{proof}

Intuitively, a modality less corrupted by noise and more informative of the target than others can be expected to have a lower predictive entropy. Thus, by the above theorem, we can conclude that such a modality would always have a higher assigned credibility than others. Conversely, when a modality gets corrupted by noise, its credibility score decreases. Thus the defined measure of credibility is {\em theoretically grounded}. Its utility becomes evident in critical domains such as healthcare, where the stakes of decision-making are high. In such contexts, credibility assessments can guide the reliance on specific expert systems or enable the discounting of modalities that are deemed unreliable. 

\subsection{PC\MakeLowercase{s} as Combination Functions}

We now present the details of late fusion models $\mathcal{M}$ capable of incorporating the above-defined notion of credibility. It is clear that estimating credibility requires access to a generative model that estimates the joint distribution over $Y$ and the unimodal predictors $\{\mathcal{F}_{j}\}_{j=1}^{M}$. Additionally, the generative model should support efficient and exact evaluation of both joint and conditional probability densities.
Probabilistic Circuits (PCs) are one such class of generative models that can model complex distributions while supporting tractable and linear time inference of conditional and marginal distributions. Further, as we show below, the distribution modeled by a PC is \pname \ under certain structural properties, making it well-suited for credibility-aware fusion.

\begin{theorem}
    A Probabilistic Circuit is \pname \  if it is smooth, decomposable, and has leaf distributions with unimodal densities upper-bounded by unity.
\end{theorem}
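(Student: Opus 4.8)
The plan is to prove a stronger, node-local statement by structural induction on the circuit and then read off the claim at the root. Concretely, for every node $n$ with scope $\textbf{sc}(n)$ and every index set $j$, I will show that marginalizing the variables in $j \cap \textbf{sc}(n)$ out of the subcircuit rooted at $n$ yields a value that dominates the joint value of $n$ pointwise in the remaining variables, i.e. $P_n^{-j}(\x^{\textbf{sc}(n)\setminus j}) \geq P_n(\x^{\textbf{sc}(n)})$ for all assignments. The crucial enabling fact is that, for a smooth and decomposable PC, marginalization is tractable and distributes through the circuit: integrating out a variable amounts to integrating it out at the leaves whose scope contains it, while sum and product nodes are left structurally unchanged. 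Applying this at the root, whose scope is all $n$ variables, gives exactly $P_\mathcal{M}(\X^{-j}) \geq P_\mathcal{M}(\X^{-j},\X^j)$, the defining inequality of a \pname{} model, for the arbitrary subset $j$.

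For the base case, consider a (univariate) leaf $n$ with scope $\{k\}$. If $k \in j$ then its marginalized value is $\int \psi_n(x_k)\, dx_k = 1$, and since the density is unimodal with its peak upper-bounded by unity we have $\psi_n(x_k) \leq 1$ everywhere, so the integrated value $1$ dominates the joint value $\psi_n(x_k)$; if $k \notin j$ the two values coincide. For the inductive step I will use the two structural properties separately. At a sum node, smoothness guarantees that every child shares the scope of $n$, so marginalizing $j$ out of $n$ is the same as marginalizing $j$ out of each child; the inductive hypothesis together with non-negativity of the weights $w_c$ then gives $\sum_c w_c P_c^{-j} \geq \sum_c w_c P_c$. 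At a product node, decomposability guarantees that the children have disjoint scopes, so the marginalization integral factorizes across children and the marginal of $n$ equals the product of the children's marginals; the inductive hypothesis together with non-negativity of every factor then yields $\prod_c P_c^{-j} \geq \prod_c P_c$.

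The step I expect to be the main obstacle is the product node, where two things must be justified carefully: first, that decomposability genuinely lets the (possibly multivariate) marginalization integral split into a product of per-child integrals --- this is where disjointness of scopes is essential --- and second, that the pointwise inequality is preserved under multiplication, which requires all intermediate circuit values to be non-negative so that replacing each factor by a larger one cannot decrease the product. I also want to be explicit about the role of each leaf assumption: unimodality together with a unit bound at the mode is exactly what upgrades a single bound at the peak to the uniform bound $\psi_n \leq 1$ over the whole domain, which is what drives the base case. Once the base case and both inductive steps are in place, the induction closes and evaluation at the root completes the proof; I will also remark that for multivariate leaves the same argument goes through provided each leaf is itself \pname{}, a property that unimodal, unit-bounded densities satisfy.
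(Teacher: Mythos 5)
Your proof is correct and follows essentially the same route as the paper's: structural induction over the circuit, with the unit bound on leaf densities driving the base case, smoothness plus convex weights handling sum nodes, and decomposability plus non-negativity of circuit values handling product nodes. The only (minor, welcome) difference is that you carry an arbitrary index set $j$ through the induction --- so at a product node the marginalized variables may split across several children and the integral factorizes over them --- whereas the paper fixes a single marginalized variable without loss of generality and observes it lies in exactly one child's scope.
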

\begin{proof}
    Deferred to the appendix.
\end{proof}

Thus, we define the fusion function using a PC $\theta$ that models the joint distribution over the unimodal predictors and the target $Y$. More formally, given unimodal experts $\{ \p_j = \mathcal{M}_{\phi_j}(\X_j)\}_{j=1}^{M}$ typically parameterized as deep neural networks, the PC models the distribution $P_\theta(Y, \mathbf{p}_1, \dots, \mathbf{p}_M)$. The PC can be viewed as a computational graph that recursively builds a complex joint distribution by taking sums and products of simpler distributions. We use categorical leaf distributions to model the target $Y$ and Dirichlet leaf distributions to model the unimodal predictive distributions  $\mathbf{p}_1,\dots,\mathbf{p}_M$. 


The PC $\theta$ can be used to define the fusion function $\mathcal{M}_{\theta}$ in different ways. Since a PC supports exact conditional density evaluation, a straightforward way would be to define:
\begin{equation*}
    \mathcal{M}_{\theta}(\p_1, \p_2, \ldots, \p_M) = P_{\theta}(Y|\p_1, \p_2, \ldots, \p_M)
\end{equation*}

We will refer to this as the \textbf{Direct-PC (DPC)} combination function. It can explicitly model complex correlations between the influence of each source on the target while still being able to reason about their credibility. The resulting late fusion method allows both predictive inference and credibility assessment as elaborated below.

\textbf{Predictive Inference.} 
Given a multi-modal example, $(\x_1, \dots, \x_M),$ we can perform predictive inference over target $Y$ as follows:\\

(1) compute $\mathbf{p}_j = \mathcal{M}_{\phi_j}(\x_j) $ for each modality $j$ by evaluating the unimodal predictors $\mathcal{M}_i, \dots, \mathcal{M}_m$.
(2) Infer the multimodal predictive distribution over $Y$ given the unimodal distributions $\mathbf{p}_1, \dots, \mathbf{p}_M$ by performing conditional inference: 
\begin{align*}
    P_\theta(Y \mid \mathbf{p}_1, \dots, \mathbf{p}_M) &= \frac{P_{\theta}(Y, \mathbf{p}_1, \dots, \mathbf{p}_M)}{P_{\theta}(\mathbf{p}_1, \dots, \mathbf{p}_M)} \\
    &= \frac{P_{\theta}(Y, \mathbf{p}_1, \dots, \mathbf{p}_M)}{\sum_{y}P_{\theta}(Y=y,\mathbf{p}_1, \dots, \mathbf{p}_M)} 
\end{align*}
\textbf{Credibility Assessment.} 
The credibility of a modality $j$ can then be estimated using the PC $\theta$ as 
\begin{equation*}
    \mathcal{C}^{\theta}_j = \delta(P_{\theta}(Y|\p_1, \ldots \p_M) ||P_{\theta}(Y|\p_1,  \ldots \p_{j-1}, \p_{j+1} \ldots \p_M))    
\end{equation*}
As smooth and decomposable PCs support linear time evaluation of joint, marginal, and conditional distributions, both predictive and credibility inference can thus be achieved in linear time. 

An alternative to the Direct-PC combination function, which explicitly utilizes the credibility scores would be to define the final predictive distribution as a convex sum of credibility-weighted unimodal predictive distributions. i.e: 
\begin{center}
    \vspace{-1em}
    $\mathcal{M}_{\theta}(\p_1, \ldots, \p_M) = \sum_{j=1}^{M} \left( \dfrac{\mathcal{C}^{\theta}_{j} }{\sum_{i=1}^{M}\mathcal{C}^{\theta}_{i}} \right) \p_j$
\end{center}
We refer to this combination function as the \textbf{Credibility-Weighted Mean (CWM)}. This approach allows us to weigh the predictive distributions according to the trustworthiness of the source, and is useful in ensuring that the final prediction reflects the most reliable and pertinent information available.
Figure \ref{fig:architecture} illustrates the overall architecture of our credibility-aware late-fusion approach.

Since PCs are differentiable computational graphs, they can be easily integrated with neural unimodal predictors and learned in an end-to-end manner using backpropagation and gradient descent.
We optimize the unimodal predictors to minimize the classification loss over both the unimodal predictions as well as the joint multimodal prediction. Further, we optimize the PC parameters to maximize the joint likelihood $P_{\theta}(Y,\p_1,\ldots,\p_M)$ as well as the  classification loss over the joint multimodal prediction.
Algorithm \ref{alg:credibility_weighted_mean} summarizes the overall training methodology for our proposed credibility-aware late multimodal fusion using PCs.

The adoption of PCs in our approach is primarily {\bf motivated by their tractability for probabilistic inference, which is instrumental in computing the probabilistic measures essential for assessing the credibility of each modality}. This tractability contrasts with the capabilities of more complex combination functions, such as neural networks, which, despite their potential for higher expressiveness and the ability to learn more intricate functions, do not inherently support the derivation of credibility measures.
PCs on the other hand offer a balance between expressiveness and tractability. Moreover, through the process of marginalization, PCs can naturally accommodate and adjust to the absence of data from one or more modalities, preserving the integrity of the inference process without requiring imputation or other preprocessing steps. This also enhances the robustness of the fusion method, ensuring reliable performance even when faced with incomplete data.

\begin{table*}[!h]
\centering
\begin{tabular}{@{}llllll@{}}
\toprule
\multicolumn{1}{c}{\textbf{Fusion Model}}     & \multicolumn{1}{c}{Accuracy} & \multicolumn{1}{c}{Precision} & \multicolumn{1}{c}{Recall} & \multicolumn{1}{c}{F1Score} & \multicolumn{1}{c}{AUROC} \\ \midrule
MLP                                                        & $\mathbf{72.43 \pm 0.15}$    & $\mathbf{72.20 \pm 0.31}$     & $\mathbf{71.97 \pm 0.18}$  & $\mathbf{71.93 \pm 0.23}$   & $96.29 \pm 0.07$          \\
Weighted Mean                                              & $66.00 \pm 1.03$             & $ 65.45 \pm 1.28$             & $65.48  \pm 1.12$          & $65.23 \pm 0.98$            & $95.25 \pm 0.05$          \\
Noisy-OR                                                   & $68.62 \pm 0.17$             & $ 68.06 \pm 0.46$             & $68.08 \pm  0.18$          & $67.76 \pm 0.21$            & $94.50 \pm 0.16$          \\
TMC                                                        & $69.95 \pm 0.11$             & $ 69.70 \pm 0.21$             & $69.45 \pm  0.15$          & $69.18 \pm 0.14$            & $94.99 \pm 0.11$          \\
\midrule
Credibility-Weighted Mean (Ours)                                & $70.41 \pm 0.15$             & $ 70.32 \pm 0.31$             & $69.46 \pm  0.27$          & $68.09 \pm 0.21$            & $94.82 \pm 0.16$          \\
Direct-PC (Ours)                               & $72.18 \pm 0.43$             & $ 71.70 \pm 0.35$             & $71.76 \pm 0.40$           & $71.63 \pm 0.36$            & $\mathbf{96.48 \pm 0.07}$      \\    
 \bottomrule
\end{tabular}
\caption{Mean test performance of late fusion methods on the \textbf{AV-MNIST} dataset, $\pm$ standard deviation across $3$ trials.}\label{tab:results_avmnist}
\end{table*}

\begin{table*}[!h]
\centering
\begin{tabular}{@{}llllll@{}}
\toprule
\multicolumn{1}{c}{\textbf{Fusion Model}}     & \multicolumn{1}{c}{Accuracy} & \multicolumn{1}{c}{Precision} & \multicolumn{1}{c}{Recall} & \multicolumn{1}{c}{F1Score} & \multicolumn{1}{c}{AUROC} \\ \midrule
MLP                                                        & $89.66 \pm 1.39$             & $90.38 \pm 1.32$              & $89.66 \pm 1.39$           & $89.56 \pm 1.38$            & $\mathbf{99.47 \pm 0.27}$          \\
Weighted Mean                                              & $91.33 \pm 2.25$             & $ 91.97 \pm 1.73$             & $91.33  \pm 2.25$          & $91.38 \pm 2.12$            & $99.39 \pm 0.33$          \\
Noisy-OR                                                   & $90.83 \pm 2.63$             & $ 91.39 \pm 2.39$             & $90.83 \pm  2.63$          & $90.86 \pm 2.56$            & $99.41 \pm 0.28$          \\
TMC                                                        & $91.50 \pm 3.24$              & $92.14 \pm 3.03$             & $91.50 \pm 3.24$           & $91.47 \pm 3.12$            & $99.45 \pm 0.29$          \\ 
\midrule
Credibility-Weighted Mean (Ours)                                & $\mathbf{92.49 \pm 1.41}$    & $ \mathbf{94.03 \pm 1.57}$    & $\mathbf{92.50 \pm 1.42}$  & $\mathbf{92.49 \pm 1.02}$   & $99.42 \pm 0.29$          \\ 
Direct-PC (Ours)                               & $91.67 \pm 1.02$             & $ 92.42 \pm 1.15$             & $91.67 \pm 1.02$           & $91.58 \pm 0.94$            & $99.28 \pm 0.40$    \\      
 \bottomrule
\end{tabular}
\caption{Mean test performance of late fusion methods on the \textbf{CUB} dataset, $\pm$ standard deviation across $3$ trials.}\label{tab:results_cub}
\end{table*}

\begin{algorithm2e}[t]
  \SetAlgoLined
  \algnewcommand{\LeftComment}[1]{\Statex \(\triangleright\) #1}
  \caption{Credibility Aware Late Fusion - Learning}
  \label{alg:credibility_weighted_mean}
  \SetKwInOut{Input}{input}\SetKwInOut{Output}{output}
  \Input{
  Multimodal Dataset $\mathcal{D}=\{ (\x_j^i,y^i)_{j=1}^{M}\}_{i=1}^{N}$, \\
  Unimodal Predictors $\{ \mathcal{M}_{\phi_{i}}\}_{i=1}^{M}$\\
  Probabilistic Circuit $\theta$,\\
  Loss function $l$, Divergence Measure $\delta$\\
  Learning rates $\eta_1, \eta_2$, \#Iterations $t_{max}$
  } 
  \Output{Optimal parameters:  $\Tilde{\theta}$, $\{ \Tilde{\phi_j} \}_{j=1}^{M}$}
  \textbf{initialize:} $\Tilde{\theta} = \theta, \{ \Tilde{\phi_j} = \phi_j \}_{j=1}^{M}, t = 1$ 
  
  \While{$t \leq t_\text{max}$} { 
    
    $\{ (\x_j^{i},y^{i})_{j=1}^{M} \}_{i=1}^{B} \sim \mathcal{D}$ \algorithmiccomment{Sample a mini-batch} 
    
    For each modality $j$ and data point $i$ 
    \LC{Compute unimodal predictive distributions ${\p^{i}_j}$}
    
    ${\p_j^i} \leftarrow \mathcal{M}_{\Tilde{\phi_j}}(\x_j^{i})$ 
    \LC{Obtain credibility scores}

    $\mathcal{C}_{j}^{i} \leftarrow \delta(P_{\Tilde{\theta}}(Y|\{\p^{i}_k\}_{k=1}^{M})||P_{\Tilde{\theta}}(Y|\{\p^{i}_k\}_{k=1}^{M}\setminus \p_j^i))$

    $\Tilde{\mathcal{C}}_{j}^{i} \leftarrow 
 \mathcal{C}_{j}^{i}/(\sum_{j=1}^{M}\mathcal{C}_{j}^{i})$ \LC{Compute the final predictive distribution}
    
    $\p^{i} \leftarrow  \sum_{j=1}^{M} \Tilde{\mathcal{C}}_{j}^{i}\p_{j}^{i}$ if CWM else  $P_{\Tilde{\theta}}(Y|\{\p^{i}_k\}_{k=1}^{M})$ 
    \LC{Compute the empirical loss}
    
    $L_{j} \leftarrow  \frac{1}{B}\sum_{i=1}^{B}l(\p_j^{i},y^i)$

    $L \leftarrow  \frac{1}{B}\sum_{i=1}^{B}l(\p^{i},y^i) + \sum_{j=1}^{M}L_j$ 
    \LC{Update the unimodal predictors and PC}
    
    $\{\Tilde{\phi_j}\}_{j=1}^{M} \leftarrow \{\Tilde{\phi_j}\}_{j=1}^{M} - \eta_1 \nabla_{\{\Tilde{\phi_j}\}_{j=1}^{M}} L $

    $ \Tilde{\theta} \leftarrow \Tilde{\theta} - \eta_2 \nabla_{\Tilde{\theta}} L + \eta_2 \nabla_{\Tilde{\theta}} 
 \sum_{i=1}^{B} P_{\Tilde{\theta}}(y^i,\{\p_j^i\}_{j=1}^{M}) $
    
    $t = t + 1$ 
  }  
  \Return{$\Tilde{\theta}, \{\Tilde{\phi_j}\}_{j=1}^{M}$}
         
\end{algorithm2e}

\section{Empirical Evaluation}

To experimentally validate the utility of the proposed approach, we conducted experiments on {\bf four different} multimodal datasets: Caltech UCSD Birds (CUB), NYU Depth (NYUD), SUN RGB-D, and AV-MNIST, focusing on the task of multi-class classification. Overall, we designed experiments to answer the following research questions:
\begin{enumerate}[leftmargin=2.5em]
    \item[\textbf{(Q1)}] Can a PC-based combining rule efficiently capture intricate dependencies between modalities to achieve performance at par with existing methods?
    \item[\textbf{(Q2)}] Can the tractability of PCs be used to reliably infer credibility scores for each source modality?
    \item[\textbf{(Q3)}] Is the proposed credibility-aware fusion robust to noise?
\end{enumerate}

\textbf{Baselines} We implemented $4$ baseline fusion functions as elaborated below for comparison:

1. \textbf{Weighted Mean} combination function that defines the multimodal predictive distribution as:
$P(Y|\X_1, \X_2, \ldots, \X_M) = \sum_{i=1}^{M} w_i P(Y|\X_i)
$ where $w_i$ are learnable weights such that $0 \leq w_i \leq 1$ and $\sum_{i=1}^{m}w_i=1$. The constraints on the weights ensure that the combination function outputs a valid distribution.

2. \textbf{Noisy-Or} combination function that defines the multimodal predictive distribution as: \\ 
$P(Y|\X_1, \X_2, \ldots, \X_M) = 1 - \prod_{i=1}^M (1-P(Y|\X_i))$

3. \textbf{Multi Layer Perceptron (MLP)} combination function that maps the vector of unimodal predictions $[P(Y|\X_i)]_{i=1}^{M}$ to the multimodal predictive distribution $P(Y|\X_1, \X_2, \ldots, \X_M)$ using a feedforward neural network having $2$ hidden layers with $64$ neurons.

4. \textbf{Dempster's} combination function, used in \textbf{TMC} (~\cite{han2021trusted}) allows evidence from different sources to be combined by fusing \textit{belief masses} and \textit{uncertainty masses}. This rule ensures that the confidence of the final prediction is high when the input modalities are less uncertain and low when the input modalities are highly uncertain. When faced with different modalities that has conflicting beliefs, this combination rule only fuses the shared parts, making the final prediction dependent only on the confident modalities when some of the modalities are more uncertain.

    

For each of these fusion methods, we use the same backbone architecture to obtain the unimodal predictions. We train all models end to end via gradient descent and backpropagation to minimize the cross-entropy loss between the targets and predictions, using an Adam optimizer with a learning rate of $0.001$ and batch size of $128$.

\begin{table*}[!h]
\centering
\begin{tabular}{@{}llllll@{}}
\toprule
\multicolumn{1}{c}{\textbf{Fusion Model}}     & \multicolumn{1}{c}{Accuracy} & \multicolumn{1}{c}{Precision} & \multicolumn{1}{c}{Recall} & \multicolumn{1}{c}{F1Score} & \multicolumn{1}{c}{AUROC} \\ \midrule
MLP                                                        & $63.55 \pm 0.23$             & $ 64.65 \pm 2.24$             & $49.32 \pm 0.95$           & $52.35 \pm 0.68$            & $86.01 \pm 0.31$          \\
Weighted Mean (WM)                                             & $64.06 \pm 4.30$             & $ 64.70 \pm 1.38$             & $57.2  \pm 3.96$           & $59.17 \pm 3.22$            & $90.99 \pm 0.78$          \\
Noisy-OR                                                   & $66.71 \pm 1.42$             & $ 68.85 \pm 1.38$             & $59.06 \pm  1.21$ & $61.71 \pm 1.31$   & $91.23 \pm 0.31$          \\
TMC                                                        & $66.97 \pm 0.26$              & $ \mathbf{68.88 \pm 1.98}$            & $56.89 \pm 1.09$           & $59.94 \pm 0.42$            & $91.47 \pm 0.39$          \\
\midrule
Credibility-Weighted Mean (Ours)                                & $\mathbf{68.50 \pm 0.72}$     & $ 67.25 \pm 1.11$   & $\mathbf{60.17 \pm 0.85}$  & $\mathbf{62.03 \pm 0.91}$   & $\mathbf{91.52 \pm 0.41}$          \\ 
Direct-PC (Ours)                               & $57.64 \pm 2.01$                & $ 48.80 \pm 1.12$             & $49.84 \pm 1.46$           & $47.96 \pm 0.79$            & $79.70 \pm 0.62$          \\
\bottomrule
\end{tabular}
\caption{Mean test performance of late fusion methods on the \textbf{NYUD} dataset, $\pm$ standard deviation across $3$ trials.}\label{tab:results_nyud}
\end{table*}

\begin{table*}[!h]
\centering
\begin{tabular}{@{}llllll@{}}
\toprule
\multicolumn{1}{c}{\textbf{Fusion Model}}     & \multicolumn{1}{c}{Accuracy} & \multicolumn{1}{c}{Precision} & \multicolumn{1}{c}{Recall} & \multicolumn{1}{c}{F1Score} & \multicolumn{1}{c}{AUROC} \\ \midrule
MLP                                                        & $54.55 \pm 1.04$             & $ 46.40 \pm 0.15$             & $45.59 \pm 1.03$  &        $43.78 \pm 0.87$              & $87.19 \pm 0.38$          \\
Weighted Mean                                              & $51.80 \pm 2.29$             & $ 45.72 \pm 1.98$             & $42.94  \pm 0.73$          & $41.59 \pm 0.31$            & $90.21 \pm 0.78$          \\
Noisy-OR                                                   & $54.30 \pm 1.55$             & $ 46.76 \pm 1.34$             & $44.26 \pm  1.11$          & $43.60 \pm 0.95$            & $90.57 \pm 0.40$          \\
TMC                                                        & $50.92 \pm 1.66$             & $ 45.21 \pm 2.25$             & $42.94 \pm 0.57$           & $40.84 \pm 0.76$            & $89.84 \pm 0.32$          \\ 
\midrule
Credibility-Weighted Mean (Ours)                                & $\mathbf{57.97 \pm 1.05}$    & $ \mathbf{48.88 \pm 0.70}$    & $\mathbf{46.04 \pm 0.67}$  & $\mathbf{45.71 \pm 0.71}$   & $\mathbf{91.25 \pm 0.35}$          \\ 
Direct-PC (Ours)                               & $53.46 \pm 1.31$             & $ 41.97 \pm 0.68$             & $42.60 \pm 0.83$           & $40.73 \pm 0.76$            & $84.34 \pm 0.53$          \\ 
\bottomrule
\end{tabular}
\caption{Mean test performance of late fusion methods on the \textbf{SUNRGBD} dataset, $\pm$ standard deviation across $3$ trials.}\label{tab:results_sunrgbd}
\end{table*}

\textbf{Datasets.}
The CUB (\cite{WahCUB_200_2011}) dataset comprises of 11,788 images of birds, each annotated with attribute descriptions across 200 bird categories. Following \citeauthor{han2021trusted} (\citeyear{han2021trusted}), we used a subset of the original dataset consisting of the first 10 bird categories and 336 train images, 144 validation, and 120 test images for our experiments. Deep visual features obtained from using GoogLeNet on images, and the text features extracted using doc2vec are used as two modalities. 

The NYUD (\cite{Silberman2012IndoorSA}) is a widely used RGB-D scene recognition benchmark, containing RGB and Depth image pairs. Following previous work by \cite{zhang2023provable}, we use a reorganized dataset with 1863 image pairs (795 train, 414 validation, and 654 test) corresponding to 10 classes (9 usual scenes and one "others" category). The SUNRGBD (\cite{Song2015SUNRA}) is a relatively larger scene classification dataset with 10,335 RGB-depth image pairs. Following \cite{zhang2023provable}, we use a subset of the original dataset which contains the 19 major scene categories and 3876 train, 969 validation, and 4,659 test examples. In both the NYUD and SUNRGBD datasets, we utilized resnet18 ~\cite{He2015DeepRL} pre-trained on ImageNet as an encoder for each modality. 

AV-MNIST is a benchmark dataset designed for multimodal fusion. With 55,000 training, 5,000 validation, and 10,000 testing examples, it has two modalities: images of dimension 28 × 28 depicting digits from 0 to 9, and their corresponding audio represented as spectrograms of dimension 112 × 112. Following \cite{vielzeuf2018centralnet}, we used deep neural models with the LeNet architecture to encode the input data and make predictions for each modality. Specifically, we processed the image input through a 4-layer convolutional neural network with filter sizes [5, 3, 3, 3]. Similarly, the audio input was encoded using a 6-layer convolutional neural network with filter sizes [5, 3, 3, 3, 3, 3]. For all the datasets, the encodings obtained were processed through a feedforward neural network to obtain the unimodal predictions.



\subsection{Performance Evaluation}

\begin{figure}[!t]
    \centering
    \includegraphics[width=\linewidth]{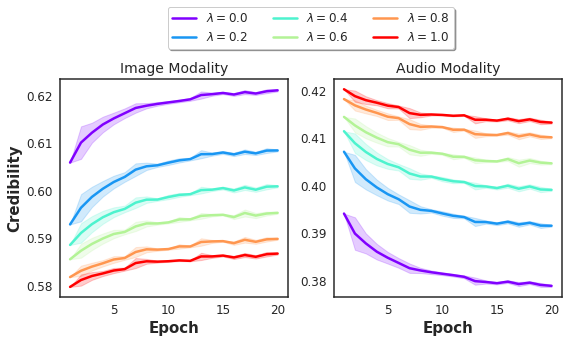}
    \caption{
    \textbf{Mean Validation Relative Credibility} obtained using a PC for the two modalities of the AV-MNIST dataset across training epochs. Varying degrees of noise (controlled by $\lambda$) are introduced into the audio modality. 
    }
    \label{fig:credibility-across-epoch}
    \vspace{-0.5cm}
\end{figure}

Table \ref{tab:results_avmnist} summarizes the test-set performance of the baseline methods and our PC-based combination functions on the AV-MNIST dataset in terms of the classification metrics - Accuracy, Precision, Recall, F1-Score, and AUC-ROC, after training for $50$ epochs with early stopping. We observe that our PC-based combination functions {\bf not only outperform simple probabilistic baselines such as Weighted Mean, Noisy-Or, and TMC on all performance metrics but also achieve performance similar to that of an MLP-based fusion} method.
Table \ref{tab:results_cub} summarizes the test-set performance of the baseline methods and our PC-based combination functions on the CUB dataset. We observe that our Credibility-Weighted Mean combination function achieves better performance than other models on average. As the CUB dataset is very small, we observed that complex models like MLP tend to overfit, impacting the test performance, while simpler combination functions like weighted mean and TMC achieved relatively better performance. Similar results obtained for the NYUD and SUNRGBD datasets are summarized in Tables \ref{tab:results_nyud} and \ref{tab:results_sunrgbd} respectively. Note that the NYUD dataset is also very small compared to the capacity of the resnet18-based models used as a backbone to encode the unimodal inputs. Here, again we can observe clearly that the relatively complex models like MLP and Direct-PC overfit, while simpler ones like Noisy-OR and Credibility-Weighted Mean generalize better.
Overall, the results suggest that the PC-based methods are expressive enough to capture intricate dependencies between unimodal predictive distributions and achieve performance at par and at times even better than more complex fusion approaches.

\subsection{Credibility Evaluation}

\begin{figure}[!h]
    \centering
    \includegraphics[width=\linewidth]{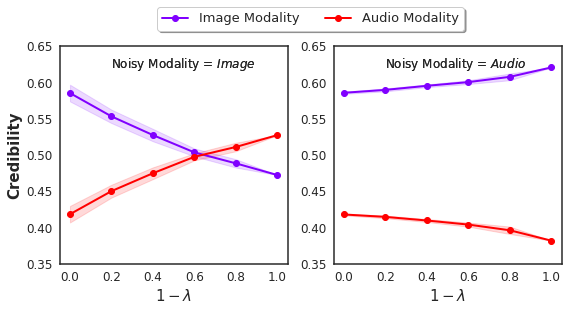}
    \caption{
    \small
    \textbf{Mean Test Relative Credibility} outputted by a PC for the two modalities of the AV-MNIST dataset across varying degrees of noise (controlled by $\lambda$) introduced into each modality. 
    }
    \label{fig:credibility-across-lamda}
\end{figure}
 
To empirically validate whether our PC-based late fusion method can reliably compute the credibility of each modality, we designed another experiment. We considered the AV-MNIST dataset and the Direct PC-based fusion model trained over it for $30$ epochs. We introduced varying degrees of noise into one of the modalities (say $i$), keeping the others fixed, and trained the PC to maximize the joint predictive likelihood. More specifically, we defined
\begin{equation*}
    \Tilde{P}(Y|\X_i) = \lambda P(Y|\X_i) + (1-\lambda) N
\end{equation*}
where $N \sim \text{Dir}(\alpha)$ is a noisy probability vector sampled from a Dirichlet distribution with parameters $\alpha$, and $0 \leq \lambda \leq 1$. $\Tilde{P}(Y|\X_i)$ is thus a convex combination of two probability distributions and is therefore a valid distribution. $\lambda$ controls the amount of information retained in  $\Tilde{P}$ from the unimodal predictive distribution. \\
Note that as $\lambda \rightarrow 0$, $\Tilde{P}(Y|\X_i) \rightarrow N$, and thus has less predictive information about modality $i$. Thus, the credibility score should ideally decrease for modality $i$ and increase for the other modalities. Figure \ref{fig:credibility-across-epoch} shows how the mean relative credibility outputted by the PC over the validation set varies as it is trained over the noisy unimodal distributions with noise introduced into the audio modality, for varying values of $\lambda$. As expected, we can see that the credibility of the audio modality decreases as training progresses, while that of the image modality increases. Further, we can also observe that the decrease in credibility increases as $\lambda \rightarrow 0$. To demonstrate this correlation more evidently, we plot the Mean Relative Credibility outputted by the trained PC for each modality on the test set, for the two settings where noise is introduced into one of image/audio modalities in Figure \ref{fig:credibility-across-lamda}. We can clearly see that in both settings, the credibility score of the noisy modality decreases as $\lambda \rightarrow 0$, while that of the non-noisy modality increases. Thus, the credibility score outputted by the PC is a reliable measure that is reflective of the information contributed by each modality to the final predictive distribution. 

By averaging the credibility of each modality over all data points, we have so far looked at a \textit{global measure}, and the image modality seems to have higher global credibility than audio for AV-MNIST (see $\lambda=1$). However, the credibility of each modality may differ locally for individual data points, which can also be evaluated efficiently using the PC.

 \subsection{Robustness to Noise}
\begin{figure}[!h]
    \centering
    \includegraphics[width=0.49\linewidth]{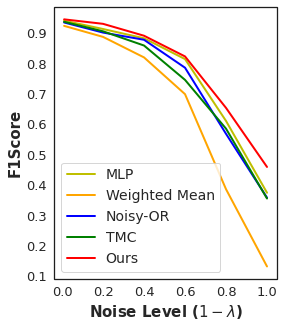}\includegraphics[width=0.5\linewidth]{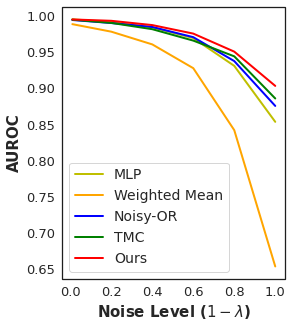}
    \caption{
    \small
    \textbf{Robustness to Noise.} Mean test performance of late fusion methods across varying degrees of noise.} 
    \label{fig:robustness}
\end{figure}
We also evaluated the robustness of our proposed credibility-aware late fusion methodology to noisy unimodal predictive distributions. Figure \ref{fig:robustness} depicts the decline in test performance for the different fusion methods over the CUB dataset when varying degrees of noise $\lambda$ are introduced in one of the unimodal predictive distributions. We can observe that our approach suffers the lowest decline in terms of both F1 score and AUROC, validating the robustness of our approach.



\section{Conclusion}
We considered the problem of late multi-modal fusion in the noisy discriminative learning setting. We derived a theoretically grounded measure of credibility and proposed probabilistic circuit-based combination functions for late-fusion that are expressive enough to model complex interactions, robust to missing modalities, and capable of making reliable and credibility-aware predictions. Our experiments demonstrated that the proposed approach is competitive with the state-of-the-art while allowing for a principled way to infer the credibility of each modality. Scaling the approach to domains with more sources and extending the framework to allow subgroup-specific credibilities are promising directions for future research.



\begin{acknowledgements} 
The authors (\textit{SS, PT, SM} and \textit{SN}) gratefully acknowledge the generous support by the AFOSR award FA9550-23-1-0239, the ARO award W911NF2010224 and the DARPA Assured Neuro Symbolic Learning and Reasoning (ANSR)
award HR001122S0039. The views and conclusions contained herein are those
of the authors and should not be interpreted as necessarily representing the official policies or endorsements, either expressed or implied, by AFOSR, ARO, DARPA or
the US government.
\end{acknowledgements}

\bibliography{references}

\onecolumn


\section*{Appendix}
\appendix
\section{Theorems and Proofs}
\begin{theorem}
     The expected credibility $\mathcal{C}^{j}$ of a modality $j$ in predicting the target $Y,$ under a \pname{} distribution, is lower bounded by the negative of the conditional entropy $(\mathbb{H})$ of the unimodal predictive distribution of modality $j$ over $Y,$ given the predictive distributions of all other modalities, i.e. 
    \begin{equation*}
    \mathbb{E}[\mathcal{C}^j] \geq - \mathbb{H}(\mathcal{F}_{\phi_j} | \{\mathcal{F}_{\phi_i}\}_{i=1}^{M} \setminus \{\mathcal{F}_{\phi_j}\})
    \end{equation*}
\end{theorem}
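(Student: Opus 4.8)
The plan is to take $\delta$ to be the KL divergence (as the definition suggests) and prove the bound in two essentially independent pieces, bridged by a standard information-theoretic identity. Throughout I would write $\mathcal{F}^{-j} := \{\mathcal{F}_{\phi_i}\}_{i=1}^{M} \setminus \{\mathcal{F}_{\phi_j}\}$ and treat $(Y, \mathcal{F}_{\phi_1}, \dots, \mathcal{F}_{\phi_M})$ as the random vector whose joint law is the \pname{} distribution $P$. Note that $\mathcal{C}^j$, being a number computed from a realization of the predictors, does not depend on $Y$, so the expectation $\mathbb{E}[\mathcal{C}^j]$ may be taken over the joint law of the predictors alone.

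First I would rewrite the expected credibility as a conditional mutual information. Since $\mathcal{C}^j = D_{\mathrm{KL}}(P(Y\mid\mathcal{F}_{\phi_j},\mathcal{F}^{-j})\,\|\,P(Y\mid\mathcal{F}^{-j}))$, averaging over the predictors gives the textbook identity
\begin{equation*}
\mathbb{E}[\mathcal{C}^j] = \mathbb{E}_{\mathcal{F}_{\phi_j},\mathcal{F}^{-j}}\big[D_{\mathrm{KL}}(P(Y\mid\mathcal{F}_{\phi_j},\mathcal{F}^{-j})\,\|\,P(Y\mid\mathcal{F}^{-j}))\big] = I(Y;\mathcal{F}_{\phi_j}\mid\mathcal{F}^{-j}).
\end{equation*}
Concretely I would obtain this by weighting the summand $P(y\mid\mathcal{F})\log\tfrac{P(y\mid\mathcal{F})}{P(y\mid\mathcal{F}^{-j})}$ by the density $P(\mathcal{F})$, using $P(\mathcal{F})P(y\mid\mathcal{F})=P(y,\mathcal{F})$, and applying Bayes' rule to convert the log-ratio of conditionals of $Y$ into a log-ratio of conditionals of $\mathcal{F}_{\phi_j}$. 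The two resulting expectations are exactly $-\mathbb{H}(\mathcal{F}_{\phi_j}\mid Y,\mathcal{F}^{-j})$ and $-\mathbb{H}(\mathcal{F}_{\phi_j}\mid\mathcal{F}^{-j})$, so that $\mathbb{E}[\mathcal{C}^j] = \mathbb{H}(\mathcal{F}_{\phi_j}\mid\mathcal{F}^{-j}) - \mathbb{H}(\mathcal{F}_{\phi_j}\mid Y,\mathcal{F}^{-j})$, making the entropy term from the statement appear explicitly. Either formulation makes $\mathbb{E}[\mathcal{C}^j]\geq 0$ immediate, by nonnegativity of mutual information (equivalently, by Gibbs' inequality applied pointwise to each $\mathcal{C}^j$ before averaging).

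The second and genuinely load-bearing piece is to show that the \pname{} hypothesis forces $\mathbb{H}(\mathcal{F}_{\phi_j}\mid\mathcal{F}^{-j})\geq 0$, i.e. $-\mathbb{H}(\mathcal{F}_{\phi_j}\mid\mathcal{F}^{-j})\leq 0$. Applying the definition of \pname{} with the distinguished block taken to be $\mathcal{F}_{\phi_j}$ yields $P(Y,\mathcal{F}^{-j})\geq P(Y,\mathcal{F}_{\phi_j},\mathcal{F}^{-j})$ pointwise; summing both sides over the finite support of $Y$ removes $Y$ and gives $P(\mathcal{F}^{-j})\geq P(\mathcal{F}_{\phi_j},\mathcal{F}^{-j})$, whence the conditional density obeys $P(\mathcal{F}_{\phi_j}\mid\mathcal{F}^{-j})\leq 1$ everywhere on its support. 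Therefore $\log P(\mathcal{F}_{\phi_j}\mid\mathcal{F}^{-j})\leq 0$, and integrating $-\log P$ against the nonnegative joint density gives $\mathbb{H}(\mathcal{F}_{\phi_j}\mid\mathcal{F}^{-j}) = -\mathbb{E}[\log P(\mathcal{F}_{\phi_j}\mid\mathcal{F}^{-j})]\geq 0$.

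Chaining the two pieces closes the argument: $\mathbb{E}[\mathcal{C}^j]\geq 0\geq -\mathbb{H}(\mathcal{F}_{\phi_j}\mid\mathcal{F}^{-j})$. The hard part is the second piece — specifically, verifying that the \pname{} inequality survives marginalizing out $Y$ (so that the conditioning set in the entropy correctly excludes $Y$, as the statement demands) and exercising the measure-theoretic care needed for the \emph{differential} entropy of the continuous, simplex-valued predictors $\mathcal{F}_{\phi_j}$, whose density is not automatically bounded by unity. This is precisely why the \pname{} assumption (and, through the companion theorem, the requirement of leaf densities bounded by unity) is imposed: it is exactly the condition guaranteeing the conditional-density bound that renders the differential entropy nonnegative. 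The first piece is then routine bookkeeping with the KL/mutual-information identity.
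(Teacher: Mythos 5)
Your proof is correct, and it takes a genuinely different route from the paper's. Writing $\F=\{\mathcal{F}_{\phi_i}\}_{i=1}^{M}$ and $\F^{-j}=\F\setminus\{\mathcal{F}_{\phi_j}\}$, the paper works pointwise with the KL divergence: it expands $\mathcal{C}^j=\frac{1}{P(\F)}\sum_y P(y,\F)\log\frac{P(y,\F)}{P(y,\F^{-j})}+\log\frac{P(\F^{-j})}{P(\F)}$, drops the second term using marginal dominance, and then applies the log-sum inequality to the first term so that its expectation collapses to $\mathbb{E}[\log P(\mathcal{F}_{\phi_j}\mid\F^{-j})]=-\mathbb{H}(\mathcal{F}_{\phi_j}\mid\F^{-j})$; in information-theoretic terms its chain is $\mathbb{E}[\mathcal{C}^j]\geq-\mathbb{H}(\mathcal{F}_{\phi_j}\mid Y,\F^{-j})\geq-\mathbb{H}(\mathcal{F}_{\phi_j}\mid\F^{-j})$, with marginal dominance supplying the first inequality and the log-sum step (i.e., conditioning reduces entropy) the second. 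You compose the same two ingredients in the opposite order, routing the chain through zero: nonnegativity of the conditional mutual information (equivalently, pointwise nonnegativity of each KL term) gives $\mathbb{E}[\mathcal{C}^j]\geq 0$ with no assumption at all, and marginal dominance is used only to make the right-hand side nonpositive via $P(\mathcal{F}_{\phi_j}\mid\F^{-j})\leq 1$, whence the differential conditional entropy is nonnegative. Your version buys two things: it is more elementary and modular, and it is more careful on the one step the paper glosses over --- the paper invokes $P(\F^{-j})\geq P(\F)$ directly, although the stated definition of marginal dominance only compares marginals against the full joint (which includes $Y$), whereas you recover this marginal-versus-marginal inequality by summing the pointwise dominance bound over the finite support of $Y$. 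The flip side is interpretive rather than logical: your decomposition makes explicit that, under the theorem's own hypothesis, the stated bound is vacuous relative to the trivial bound $\mathbb{E}[\mathcal{C}^j]\geq 0$, since marginal dominance forces $-\mathbb{H}(\mathcal{F}_{\phi_j}\mid\F^{-j})\leq 0$; the paper's route lands on the entropy bound organically via the log-sum inequality, which explains how the statement was arrived at, but it proves nothing stronger --- its intermediate quantity $-\mathbb{H}(\mathcal{F}_{\phi_j}\mid Y,\F^{-j})$ is likewise nonpositive under marginal dominance.
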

\begin{proof}
    For ease, let us use the notation $\F = \{\mathcal{F}_{\phi_i}\}_{i=1}^{M} $ and $\F^{-j} = \{\mathcal{F}_{\phi_i}\}_{i=1}^{M} \setminus \{\mathcal{F}_{\phi_j}\}$. We have from the definition of credibility, using KL divergence as the divergence measure,
    \begin{align*}
        \mathcal{C}^j ={}&  \delta(P(Y|\F)||P(Y|\F^{-j}))\\
        ={}& \sum_y P(y|\F) \log \frac{P(y|\F)}{P(y|\F^{-j})} \\ 
        ={}&  \frac{1}{P(\F)} \sum_y P(y,\F) \log \frac{P(y,\F)P(\F^{-j})}{P(\F)P(y,\F^{-j})} \\
        ={}& \frac{1}{P(\F)} \sum_y P(y,\F) \log \frac{P(y,\F)}{P(y,\F^{-j})} + \log \frac{P(\F^{-j})}{P(\F)}
    \end{align*}
Now, we know that $\log \frac{P(\F^{-j})}{P(\F)} \geq 0$ as $P$ is assumed to be \pname{}. 
Thus, we have
\begin{align*}
    \mathcal{C}^j \geq{}& \frac{1}{P(\F)} \sum_y P(y,\F) \log \frac{P(y,\F)}{P(y,\F^{-j})}
\end{align*}
Now, applying the log sum inequality $\sum_i a_i \log \frac{a_i}{b_i} \geq \Bar{a}\log\frac{\Bar{a}}{\Bar{b}}$ where, $\Bar{a}=\sum_i a_i, \Bar{b}=\sum_i b_i$, and taking expectations, we get
\begin{align*}
    \mathbb{E}[\mathcal{C}^j] \geq{}& \mathbb{E}[\frac{1}{P(\F)} (\sum_y P(y,\F)) \log \frac{\sum_y P(y,\F)}{\sum_y P(y,\F^{-j})}] \\
     ={}& \mathbb{E}[\log \frac{P(\F)}{P(\F^{-j})}] = \mathbb{E}[\log \frac{P(\F^{-j}, \mathcal{F}_{\phi_j})}{P(\F^{-j})}]
\end{align*}
Using the definition of conditional entropy $\mathbb{H}(Y|X) = \mathbb{E}[-\log \frac{P(X,Y)}{P(X)}]$, the above inequality reduces to 
\begin{align*}
    \mathcal{C}^j \geq{}& - \mathbb{H}(\mathcal{F}_{\phi_j}|\F^{-j})
\end{align*}
\end{proof}

\begin{theorem}
    A Probabilistic Circuit is \pname \ if it is smooth, decomposable and has leaf distributions with unimodal densities upper-bounded by unity.
\end{theorem}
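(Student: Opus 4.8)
The plan is to proceed by structural induction over the DAG $G$ of the circuit, establishing the \pname{} inequality at the root by proving the stronger statement that it holds at \emph{every} node. First I would recall that for a smooth and decomposable PC, marginalizing out the variables indexed by $j$ is exact and is carried out by the same circuit in which every leaf whose scope lies in $j$ is replaced by the integral (or sum) of its density over that variable's domain, while all other computations are untouched. Because each such leaf $\psi_n$ is a normalized density, this integral equals $1$. Writing $P_n^{-j}$ for the value of node $n$ when the variables in $j$ are marginalized and $P_n$ for its value on the full assignment, the goal becomes $P_n^{-j}(\x^{-j}) \ge P_n(\x)$ for every node $n$, the root instance being the claim.

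For the base case I would use the hypothesis on the leaves. A leaf whose scope avoids $j$ is unaffected by marginalization, so $P_n^{-j} = P_n$ and the inequality holds with equality. A leaf whose scope lies in $j$ contributes $P_n^{-j} = 1$ after marginalization, whereas its unmarginalized value is $\psi_n(\x) \le 1$; this is exactly where the assumption that the leaf densities are upper-bounded by unity is used, yielding $P_n^{-j} = 1 \ge \psi_n(\x) = P_n$. Unimodality lets this everywhere-bound be certified at the mode, since a unimodal density attains its supremum there.

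For the inductive step I would treat the two internal node types separately. At a sum node, smoothness guarantees that all children share the same scope, so each child is marginalized over the same variables of $j$; combining the inductive hypothesis $P_c^{-j} \ge P_c$ with the nonnegativity of the weights gives $P_n^{-j} = \sum_{c} w_c P_c^{-j} \ge \sum_c w_c P_c = P_n$. At a product node, decomposability guarantees disjoint child scopes, so each variable of $j$ is marginalized inside exactly one child and the marginal factorizes as $P_n^{-j} = \prod_c P_c^{-j}$; since all circuit values are nonnegative and $P_c^{-j} \ge P_c \ge 0$ for every child, the product preserves the inequality, $\prod_c P_c^{-j} \ge \prod_c P_c = P_n$. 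Propagating to the root gives $P_\mathcal{M}(\X^{-j}=\x^{-j}) \ge P_\mathcal{M}(\X^{-j}=\x^{-j}, \X^{j}=\x^{j})$, which is the definition of \pname{}.

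The main obstacle I anticipate is not the induction itself but justifying the clean factorization of the marginal at product nodes together with the leaf bookkeeping: one must ensure that each variable of $j$ is marginalized in precisely one subcircuit and that the bound $\psi_n \le 1$ applies to the exact leaf being integrated out. This is immediate when leaves are univariate or, as in the modality setting where $\X^{j}=\p_j$ is modeled by a single Dirichlet leaf, when $j$ aligns with the leaf scopes. A straddling leaf whose scope meets both $j$ and its complement would break the base case, so the statement is most naturally read under the (here satisfied) assumption that the index set $j$ respects the leaf decomposition.
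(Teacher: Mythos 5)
Your proof is correct and follows essentially the same route as the paper's: structural induction over the circuit, with the unity bound handling leaves whose scope is marginalized, smoothness ensuring convex combinations at sum nodes preserve the inequality, and decomposability ensuring each marginalized variable lives in exactly one child of a product node. Your closing remark about leaves straddling $j$ and its complement is a sensible caveat, but it does not arise in the paper's setting, where the induction is stated over univariate (or scope-aligned) leaf distributions.
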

\begin{proof}
    Consider a PC $\mathcal{M}$ representing the distribution over $n$ variables $\X.$ Without loss of generality let $j$ denote the index of the variable being marginalized. Recall that $\mathcal{M}$ is said to be \pname \ if $P_\mathcal{M}(\X^{-j} = \x^{-j}) \geq P_\mathcal{M}(\X^{-j} = \x^{-j}, \X^{j} = \x^{j})\ \  \forall \ (\x^j,\x^{-j}) \in Dom.(\X^j, \X^{-j})$.
    
    As PCs are recursively defined as compositions of three types of nodes - sums, products and univariate leaf distributions in the form of a rooted directed acyclic graph, we can prove by induction on the height of a PC that the introduction of each type of node preserves marginal dominance under the structural properties of smoothness, decomposability and unity bounded leaf densitites.
    
    As the base case, consider any univariate leaf node $l$ in $\mathcal{M}.$ We have
    \begin{equation*}
        P_l(\X = \x) = \psi_l(\X^{\textbf{sc}(l)} = \x^{\textbf{sc}(l)})
    \end{equation*}
    where $\psi_l$ denotes the leaf density function and $\textbf{sc}(l)$ denotes the scope of $l$. Now, if ${\textbf{sc}(l)} = j$, then
    \begin{equation*}
        P_l(\X^{-j} = \x^{-j}) = \int_{\x^j} P_l(\X^{-j} = \x^{-j}, \X^j = \x^j) d\x^j = 1 \geq P_l(\X^{-j} = \x^{-j}, \X^j = \x^j)
    \end{equation*}
    since the leaf densities are upper bounded by unity. On the other hand, if ${\textbf{sc}(l)} \ne j$, then $P_l(\X^{-j})=P_l(\X^{-j}, \X^j)$ trivially. Thus under both cases, leaf nodes are \pname, hence the base case is satisfied.
    
    Now, let us assume that all nodes at height $K-1$ in the PC satisfies marginal dominance. We will show that all nodes at height $K$ also satisfies marginal dominance. Note that as sums and products constitute the internal nodes in a PC any node at height $K$ is obtained by introducing either a sum node or a product node over nodes at height $K-1$. Let us consider the two cases separately.

    Let $\times \in \mathcal{M}$ denote a decomposable product node at height $K$. We have 
    \begin{equation*}
        P_\times(\X = \x) = \prod_{c\in \textbf{ch}(\times)} P_c(\X^{\textbf{sc}(c)} = \x^{\textbf{sc}(c)})
    \end{equation*}
    where $\textbf{ch}(\times)$ denotes the children of $\times$.
    Thus, if $\times$ is decomposable then $\X^{j}$ can be present in the scope of only one of its children, say $\mathcal{N}$. Thus we have, 
    \begin{equation*}
        P_\times(\X^{-j} = \x^{-j}, \X^j=\x^j) = \left[ \prod_{c\in \textbf{ch}(\times), j \not\in \textbf{sc}(c)} P_c(\X^{\textbf{sc}(c)} = \x^{\textbf{sc}(c)}) \right] P_{\mathcal{N}}(\X^{\textbf{sc}(\mathcal{N}) \setminus j}=\x^{\textbf{sc}(\mathcal{N}) \setminus j}, \X^j=\x^j)
    \end{equation*}
    Now, since $\mathcal{N}$ is a node of height atmost $K-1$, by the inductive assumption, it is \pname. Hence, $\forall \x^j \in Dom.(\X^j)$, we have
    \begin{align*}
    P_\times(\X^{-j} = \x^{-j}) &=  \left[ \prod_{c\in \textbf{ch}(\times), j \not\in \textbf{sc}(c)} P_c(\X^{\textbf{sc}(c)} = \x^{\textbf{sc}(c)}) \right] P_{\mathcal{N}}(\X^{\textbf{sc}(\mathcal{N}) \setminus j}=\x^{\textbf{sc}(\mathcal{N}) \setminus j})  \\
    &\geq \left[ \prod_{c\in \textbf{ch}(\times), j \not\in \textbf{sc}(c)} P_c(\X^{\textbf{sc}(c)} = \x^{\textbf{sc}(c)}) \right] P_{\mathcal{N}}(\X^{\textbf{sc}(\mathcal{N}) \setminus j}=\x^{\textbf{sc}(\mathcal{N}) \setminus j}, \X^j=\x^j)\\
    &= P_\times(\X^{-j} = \x^{-j}, \X^j=\x^j) 
    \end{align*}

    Thus, since the product of non-negative terms preserves the direction of the inequality and $\mathcal{N}$ is \pname \ , the product node $\times$ is also \pname.

    Now, let $+ \in \mathcal{M}$ denote a smooth sum node at height $K$. We have  
    \begin{equation*}
        P_+(\X^{-j} = \x^{-j}, \X^j=\x^j) = \sum_{c\in \textbf{ch}(+)} w_c P_c(\X^{-j} = \x^{-j}, \X^j=\x^j)
    \end{equation*}
    where $0\leq w_c \leq 1 \ \forall w_c$ and $\sum_{c \in \textbf{ch}(+)} w_c = 1$. Since each $c \in \textbf{+}$ is a PC node of height atmost $K-1$, it is marginal dominant by the inductive assumption. Thus we have, $\forall \x^j \in Dom.(\X^j)$,

    \begin{align*}
        P_+(\X^{-j} = \x^{-j}) &= \sum_{c\in \textbf{ch}(+)} w_c P_c(\X^{-j} = \x^{-j}) \\
        &\geq  \sum_{c\in \textbf{ch}(+)} w_c P_c(\X^{-j} = \x^{-j}, \X^j = \x^j)\\
        &= P_+(\X^{-j} = \x^{-j}, \X^j=\x^j)
    \end{align*}
    
    \textit{i.e.,} $+$ is  \pname \ which follows from the fact that the convex combination preserves the direction of the inequality. 

    Thus all nodes at height $K$ are also marginal dominant, and by principle of mathematical induction, we can conclude that the PC is \pname.

\end{proof}


    



\section{Implementation Details}
\textbf{Datasets.}
The CUB (\cite{WahCUB_200_2011}) dataset comprises of 11,788 images of birds, each annotated with attribute descriptions across 200 bird categories. Following \citeauthor{han2021trusted} (\citeyear{han2021trusted}), we used a subset of the original dataset consisting of the first 10 bird categories and 336 train images, 144 validation, and 120 test images for our experiments. Deep visual features obtained from using GoogLeNet on images, and the text features extracted using doc2vec are used as two modalities. 

The NYUD (\cite{Silberman2012IndoorSA}) is a widely used RGB-D scene recognition benchmark, containing RGB and Depth image pairs. Following previous work by \cite{zhang2023provable}, we use a reorganized dataset with 1863 image pairs (795 train, 414 validation, and 654 test) corresponding to 10 classes (9 usual scenes and one "others" category). The SUNRGBD (\cite{Song2015SUNRA}) is a relatively larger scene classification dataset with 10,335 RGB-depth image pairs. Following \cite{zhang2023provable}, we use a subset of the original dataset which contains the 19 major scene categories and 3876 train, 969 validation, and 4,659 test examples. In both the NYUD and SUNRGBD datasets, we utilized resnet18 ~\cite{He2015DeepRL} pre-trained on ImageNet as an encoder for each modality. 

AV-MNIST is a benchmark dataset designed for multimodal fusion. With 55,000 training, 5,000 validation, and 10,000 testing examples, it has two modalities: images of dimension 28 × 28 depicting digits from 0 to 9, and their corresponding audio represented as spectrograms of dimension 112 × 112. Following \cite{vielzeuf2018centralnet}, we used deep neural models with the LeNet architecture to encode the input data and make predictions for each modality. Specifically, we processed the image input through a 4-layer convolutional neural network with filter sizes [5, 3, 3, 3]. Similarly, the audio input was encoded using a 6-layer convolutional neural network with filter sizes [5, 3, 3, 3, 3, 3]. For all the datasets, the encodings obtained were processed through a feedforward neural network to obtain the unimodal predictions.

\section{Experimental Setup}
For the experiments, we utilized Intel Xeon Platinum 8167M CPU with 24 cores along with NVIDIA Tesla V100 GPUs, each with 16GB memory. Our setup included a total of 2 GPUs, enabling us to distribute the workload efficiently across CUDA cores. However, our experimental results can be reproduced using a single GPU instance of the V100 with the aforementioned configuration.

A total of 8 workers were used to load, preprocess and train the model for each of the datasets. The compute time for the experiment when run on a single GPU instance was approximately an hour for each configuration of the combination functions for the NYUD and AV-MNIST datasets whereas it took only 6 minutes for CUB dataset due to it's compact size. SUN-RGBD, on the other hand, took about 5 hours to run each configuration as it's huge in size, compared to other datasets. Memory utilization was closely monitored, and we observed an approximate average usage of 1, 9, 2 and 9 GB for CUB, NYUD, AVMNIST and SUNRGBD respectively.







\end{document}